\pdfoutput=1

\documentclass[11pt]{article}

\usepackage[preprint]{acl}

\usepackage{times}
\usepackage{latexsym}
\usepackage{amsmath}
\usepackage{amssymb}
\usepackage{mathtools}
\usepackage{amsthm}
\usepackage{float}
\usepackage{multirow}
\usepackage[T1]{fontenc}
\usepackage{todonotes}
\usepackage[utf8]{inputenc}
\usepackage{microtype}

\usepackage{inconsolata}

\usepackage{graphicx}
\usepackage{subcaption}

\usepackage{tcolorbox}
\tcbuselibrary{skins}
\newtcolorbox{promptbox}[1][]{
    enhanced,
    colback=blue!3!white,
    colframe=blue!30!black,
    boxrule=0.4pt,
    arc=1mm,
    left=4pt, right=4pt, top=4pt, bottom=4pt,
    fontupper=\small,
    #1
}
\usepackage{enumitem}

\usepackage{tikz}
\usetikzlibrary{positioning, shapes.geometric, arrows.meta, fit, backgrounds, calc}

\theoremstyle{plain}
\newtheorem{theorem}{Theorem}[section]

\newtheorem{lemma}[theorem]{Lemma}

\theoremstyle{definition}
\newtheorem{definition}[theorem]{Definition}

\theoremstyle{remark}

\title{A theoretical model for task routing in mixture-of-expert transformers}

\author{
  Vinoth Nandakumar\thanks{\ \ Correspondence to: \texttt{vinoth.90@gmail.com}} \\
  University of Sydney
  \And
  Yongli Xiang \\
  University of Sydney
  \AND
  Yunzhi Yao \\
  Zhejiang University
  \And
  Peike Li \\
  Google Research
  \And
  Tongliang Liu \\
  University of Sydney
}

\begin{document}
\maketitle
\begin{abstract}
Mixture-of-experts (MoE) layers enable the scaling of transformer models while keeping the inference compute fixed. While task-expert specialization has been observed in empirical studies of frontier MoE transformer models, existing theoretical work analyzes this using continuous mixture models that cannot be used to model natural language effectively. An important open question is to \textit{theoretically explain task-expert specialization in transformer MoE models using discrete models of language}. To address this, we represent structured knowledge via syntactic templates and finite key-value dictionaries, and prove formally that a single-layer MoE transformer can encode knowledge by using experts that specialize in the corresponding tasks. Our construction shows how queries are routed to unique, task-specific experts whose size depends solely on the intrinsic complexity of the given task (i.e. the combined size of its syntactic templates and factual dictionary). Our construction provides a theoretical support for empirical results on localized knowledge circuits in MoE models. We support our theoretical findings with experiments evaluating model performance under varying MoE loss functions. \end{abstract}

\section{Introduction}


Modern transformer models achieve state-of-the-art accuracy by leveraging hundreds of attention heads and billions of parameters, yet mounting evidence shows that only a small fraction of these computations is required for any single input \cite{dejavu, circuit-sparsity}. Frontier large language models increasingly adopt mixture-of-experts transformers \cite{deepseek-moe, qwen2-moe}, which dynamically route tokens through a sparse collection of expert modules at each layer, achieving state-of-the-art performance with substantially reduced inference cost \cite{shazeer17, switch22}. 

Empirical studies have observed task–expert specialization in MoE transformers \cite{MoE-knowledge}, where experts learn to specialize in fine-grained semantic tasks. Ideally, each task would be routed to a well-defined sparse subset of experts, whose size scales with the intrinsic complexity of that task, leading to inference-time speedups \cite{taskMoE}. Mapping specific tasks to distinct sub-networks could also enhance interpretability by allowing practitioners to trace decision-making using these functional circuits \cite{modular-LLMs}, thereby leading to safer and more reliable systems. An important open question is whether we can establish a rigorous \textbf{theoretical foundation for explicit task-expert specialization in MoE language models} \cite{Mod-Squad}. 

From a theoretical perspective, recent studies have analyzed task-expert specialization in mixture-of-experts architectures \cite{MoE-theory, routing}, and establish theoretical guarantees by using continuous data distributions, such as Gaussian mixture models. However, these frameworks analyze feedforward-only MoE models lacking multi-head attention. They don't extend to discrete data distributions, such as $n$-gram templates \cite{transformer-ngrams, SSM-ngrams}, which use a predefined pattern with fixed tokens and wildcard symbols to generate sentences. To address these gaps in the theoretical foundations of MoE transformer models, we \textbf{theoretically explain how they can utilize attention heads and task-specialized experts to process and store discrete structures that can be used to model language}. 

\begin{figure*}[t]
    \centering
    \begin{tikzpicture}[
        node distance=1cm and 1.2cm,
        colored_sq/.style={rectangle, draw=black, thick, rounded corners=4pt, minimum size=0.8cm},
        attn_layer/.style={rectangle, draw=black, thick, fill=orange!40, minimum width=0.5cm, minimum height=3.2cm},
        arrow/.style={-{Stealth[length=2.5mm]}, thick},
        colored_ci/.style={circle, draw=black, thick, rounded corners=4pt, minimum size=0.8cm},
    ]

    \node[colored_ci, fill=green!30] (tok1) {};
    \node[right=0.2cm of tok1] (lbl1) {task 1};

    \node[colored_ci, fill=red!30, below=0.2cm of tok1] (tok2) {};
    \node[right=0.2cm of tok2] (lbl2) {task 2};

    \node[colored_ci, fill=cyan!30, below=0.2cm of tok2] (tok3) {};
    \node[right=0.2cm of tok3] (lbl3) {task 3};

    \begin{scope}[on background layer]
        \node (input_box) [draw=black, thick, rounded corners=10pt, fit=(tok1) (lbl1) (tok3) (lbl3), inner sep=10pt] {};
        \node (input_lbl) [above=0.3cm of input_box.north] {\textbf{Input Tokens}};
    \end{scope}

    \node[attn_layer, right=1.2cm of input_box, xshift=0.2cm, yshift=0.2cm] (attn_back) {};
    \node[attn_layer, right=1.2cm of input_box, xshift=0.1cm, yshift=0.1cm] (attn_mid) {};
    \node[attn_layer, right=1.2cm of input_box] (attn_front) {\rotatebox{90}{\small Attention}};
    
    \node [anchor=base] at (attn_back.north |- input_lbl.base) {\textbf{Attention Block}};

    \node[colored_sq, fill=red!30, right=3.8cm of attn_front] (exp2) {};
    \node[below=0.15cm of exp2] (exp2_lbl) {\small expert 2};

    \node[colored_sq, fill=green!30, left=0.3cm of exp2] (exp1) {};
    \node[below=0.15cm of exp1] (exp1_lbl) {\small expert 1};

    \node[colored_sq, fill=cyan!30, right=0.3cm of exp2] (exp3) {};
    \node[below=0.15cm of exp3] (exp3_lbl) {\small expert 3};

    \begin{scope}[on background layer]
        \node (moe_box) [draw=black!50, dashed, thick, rounded corners=10pt, fit=(exp1) (exp3) (exp2_lbl), inner sep=10pt, fill=blue!2] {};
    \end{scope}

    \node (output_box) [draw=black, thick, rounded corners=10pt, inner sep=10pt, anchor=west] at ([xshift=1.2cm]moe_box.east |- input_box.east) {\textbf{Output token}};

    \draw [arrow] (input_box.east) -- (attn_front.west |- input_box.east);
    
    \draw [arrow] (attn_back.east |- input_box.east) -- node[above, font=\small\bfseries] {MoE Router} (moe_box.west |- input_box.east);
    
    \draw [arrow] (moe_box.east |- input_box.east) -- (output_box.west);

    \end{tikzpicture}
    
    \caption{An architectural overview of task routing in mixture-of-expert transformers. The attention block separates dictionary tokens from templates, allowing the router to use template structure to select a task-specific expert for prediction. Expert size scales additively with task complexity.}
    \label{fig:moe_theory_overview}
    \vspace{-4mm}
\end{figure*}


\textbf{Our contributions.} To study this question theoretically, we introduce a simplified discrete model of structured knowledge based on syntactic templates and finite key–value dictionaries. Within this framework, we establish a formal result showing that a single–layer \emph{mixture-of-experts} transformer can represent structured knowledge using task-specialized experts. We theoretically construct MoE transformers in which the attention mechanism disentangles template structure from the factual subject,  while the routing mechanism maps inputs to task-specific experts. Each expert then performs task-specific associative retrieval over the corresponding knowledge dictionary. In particular, our key contributions are as follows. 


\begin{itemize}[leftmargin=*, topsep=0pt]\setlength{\parskip}{0pt}
    \item We identify a gap in existing MoE theory, which primarily explains routing using mixture models that cannot model natural language effectively. To address this, we propose a theoretical framework for sparse routing in MoE transformers based on syntactic templates and finite key–value dictionaries. 
    \item Within this framework, we show that attention blocks can separate structural templates from factual information, enabling routing based on relational structure while sparse experts perform task-specific associative retrieval, and the size of the expert scales with task complexity. 
    \item Using synthetic knowledge datasets, we empirically study routing behavior under different training objectives and observe that explicit task-aware routing objectives produce substantially clearer task–expert alignment than standard objectives alone. 
\end{itemize}

\noindent The remainder of this paper is organized as follows. Section~\ref{related-work} reviews related work on MoE transformers and knowledge circuits. Section~\ref{preliminaries} introduces a synthetic model for knowledge data, and transformer architectures that are used in subsequent sections. Section~\ref{theorems} presents our key theoretical results, explicitly constructing dense transformers with sparse circuits  (Theorem~\ref{sparse-circuits}) and MoE transformers with task-specific 
experts (Theorem~\ref{MoE}). Section~\ref{proofs} gives an overview of the proofs for these core theorems, and in Section \ref{experiments} we conduct experiments on synthetic data that support our key theoretical findings. 

\section{Related work} \label{related-work}

\subsection{Theoretical models for mixture-of-expert transformers}

To bridge the gap between empirical success and mathematical understanding, recent theoretical studies have sought to formally characterize the optimization and generalization properties of mixture-of-experts architectures under simplified settings. For instance, \citet{MoE-theory} provides a foundational analysis of the MoE model trained with gradient descent where each expert is a two-layer CNN, using a data distribution with cluster structure. Building upon this, \citet{routing} uses synthetic data generated from a Gaussian mixture model, and shows that the router in a MoE model with a feedforward network learns the latent cluster structures. Further investigations have expanded on these properties by establishing formal generalization error bounds for sparse MoEs \cite{MoE-generalization} and demonstrating their theoretical advantages in mitigating catastrophic forgetting during continual learning \cite{MoE-continual}. However, a significant limitation of these existing frameworks is that they do not use MoE layers with attention mechanisms, nor do they model the discrete, sequential data typical of natural language processing tasks. 

\subsection{Task-specific experts for language models} In \citet{demix}, the authors introduce DeMix Layers, a modular mixture-of-experts (MoE) architecture that embeds domain specialization directly within the transformer network. Each layer contains a mixture of feed-forward experts, where every expert corresponds to a specific domain, and attention layers are shared across domains. In \cite{scale-experts, BTM}, the authors propose an expert-style training recipe that discovers domains without supervision, clusters the corpus accordingly, trains a separate expert LM per cluster, and combines them as a sparse ensemble at inference (so only a few experts are active per input). While these studies highlight the practical success of task-specific routing, they primarily offer empirical observations without formal guarantees. Our results bridge this gap by providing a mathematically rigorous foundation for these architectures. 

\subsection{Knowledge circuits}

Recent investigations into how transformers store and retrieve facts have identified `knowledge circuits' \cite{knowledge-circuits, wang2025headmap}, which are localized sub-networks within pretrained models dedicated to specific factual associations \cite{factual-recall}. Building on this with mechanistic interpretability, \citet{knowledge-acquisition} utilizes circuits to analyze how large language models acquire and update information during continual pre-training. Our results complement recent theoretical results understanding factual recall with dense transformers \cite{nichani24}, by showing how mixture-of-expert models can store key-value information using sparse, task-specific circuits. Our theoretical analysis further supports the empirical findings on knowledge circuits in Section 5 of \citet{MoE-knowledge}, which show how experts in Qwen 1.5-MoE specialize in structured key-value associations like name-birthplace and country-capital pairs (see also \citet{MoE-knowledge-localization} for a related analysis with multilingual datasets). 



\section{Preliminaries} \label{preliminaries}

\subsection{Data model for knowledge}
\label{data}

We start with a simplified theoretical model for structured knowledge datasets, which are defined using a collection of syntactic patterns in conjunction with a finite set of subject–object associations (see \cite{factual-recall}, and Section 4 of \cite{nichani24}, for similar models of synthetic factual recall tasks). The patterns are represented by templates containing designated wildcard positions, which are instantiated by replacing the wildcards with words drawn from a task-specific knowledge table. This framework allows us to separate syntactic structure (captured by templates) from factual content (captured by key–value pairs).

\begin{definition}[Templates] \label{templates} Let \(\Sigma\) be a finite alphabet containing every token (word or punctuation mark) that may occur in the corpus. We introduce two distinguished wildcard symbols \(\star_{1},\star_{2}\notin\Sigma\). 
\leavevmode
A \emph{template} is a string
          \[ \tau \;=\; w_{1}w_{2}\dots w_{L}
             \;\in\;(\Sigma\cup\ \{ \star_{1},\star_{2}\})^{\ast}, \]
that contains each wildcard \(\star_{1}\) and \(\star_{2}\) exactly once, with \(\star_{1}\) occurring before \(\star_{2}\).

For any template \(\tau\) and any pair \((k,v)\in \Sigma \),
define the \emph{instantiation} $\tau[k,v]$ to be the string obtained by simultaneously replacing \(\star_{1}\mapsto k\) and \(\star_{2}\mapsto v\) in \(\tau\).
 \end{definition}
\begin{definition}[Knowledge tasks]
\label{knowledge-tasks}

A knowledge task $\underline{k}$ consists of a knowledge table $\Delta_{\underline{k}}$, and a set of templates $T(\underline{k})$. Here a knowledge table is a finite subset \( \Delta_{\underline{k}} \;\subseteq\;\Sigma^{\ast}\times\Sigma^{\ast}, \) whose first component is called the \emph{subject} and whose second component is the associated \emph{object}.

Define the sentences $\mathcal{D}(\underline{k})$ generated by the collection as follows.
\[ \mathcal{D}(\underline{k}) \;=\; \bigl\{\,
           \tau[k,v]\;:\;
           \tau \in T(\underline{k}),\;
           (k,v) \in \Delta_{\underline{k}} \bigr\} \]

Below we define the \emph{task complexity} $c(\underline{k})$ of a given task $\underline{k}$ as the combined size of its syntactic patterns and semantic facts. Here $|\tau|$ is the length of the template and $|\Delta_{\underline{k}}|$ is the number of key-value pairs in the knowledge table. $$ c(\underline{k}) = \sum_{\tau \in T(\underline{k})} |\tau| + |\Delta_{\underline{k}}| $$

\end{definition}

\begin{definition}[Knowledge dataset]
Define a knowledge dataset to be a pair $(\mathcal{K}, \mathcal{T})$, where $\mathcal{K}$ is a set of knowledge tasks and $\mathcal{T}$ is a set of templates. Here for each knowledge task $\underline{k} \in \mathcal{K}$, the set of templates $T(\underline{k})$ is a subset of $\mathcal{T}$. Define the \emph{sentence set} generated by the collection as follows. 
\[ D(\mathcal{K}, \mathcal{T})
      \;=\; 
      \bigcup_{\underline{k} \in \mathcal{K}}
      \mathcal{D}(\underline{k}) . \]
That is, we instantiate every template in every family
with every key–value pair drawn from the corresponding dictionary and then take the union over all knowledge tasks. \end{definition}

\paragraph{Example.}
Consider the two templates  
\(\tau_1 = \text{``}\star_{1}\text{ is in }\star_{2}\text{.''}\), \quad
\(\tau_2 = \text{``}\star_{1}\text{ is spoken widely in }\star_{2}\text{.''}\),  
with dictionaries  
\(\Delta_1=\{(\text{Paris},\text{France}),\,(\text{Madrid},\text{Spain})\}\), \quad
\(\Delta_2=\{(\text{English},\text{Canada}),\,(\text{Hindi},\text{India})\}\).  
Applying Definition~\ref{knowledge-tasks} gives the following, using the knowledge tasks $\underline{k_1} = (\Delta_1, \{ \tau_1 \})$ and $\underline{k_2} = (\Delta_2, \{ \tau_2 \} )$. 
\[ \begin{aligned} 
\mathcal{D}(\underline{k_1}) = \{ &\text{``Paris is in France.''},\; \\ 
&\text{``Madrid is in Spain.''} \}, 
\end{aligned} \]

\[ \begin{aligned} 
\mathcal{D}(\underline{k_2}) = \{ &\text{``English is spoken widely in Canada.''},\; \\ 
&\text{``Hindi is spoken widely in India.''} \}. 
\end{aligned} \]

\subsection{Transformer models} \label{transformer-model}

\paragraph{Dense transformers.}

We assume familiarity with transformers \cite{vaswani2017}; see Appendix \ref{def} for the definitions. A \emph{transformer block} first applies multi-head self-attention, then a position-wise feedforward neural network, wrapping each sub-block in residual connections. The multi-head attention mechanism ($\text{MHA}$) processes an input sequence by projecting it into queries, keys, and values using learned weight matrices $W_Q, W_K$, and $W_V$. For each attention head, the output is formed by computing attention weights via a softmax using scaled dot products of the queries and keys. The outputs from multiple heads are concatenated via an output matrix $W_O$. A \emph{transformer model} $\mathcal{M}$ is obtained by composing transformer blocks followed by a linear output head. 

We define a \emph{circuit} $C$ as a sparse sub-network within the transformer $\mathcal{M}$, consisting of a designated subset of attention heads and feedforward neurons \cite{knowledge-circuits}. We denote by $\mathcal{M}_C$ the restricted model where all components outside of $C$ are set to zero. 

\paragraph{Mixture of experts transformers.}

A mixture-of-experts (MoE) transformer modifies the feedforward component of a transformer layer by replacing a single feedforward network with a collection of expert feedforward networks and a gating mechanism that selects experts per position \cite{shazeer17}. For simplicity, we restrict our formulation to top-1 expert routing, activating only a single expert per token \cite{switch22}.

Let $\mathrm{FF}_1,\dots,\mathrm{FF}_E$ be $E$ position-wise feedforward networks, each with input/output width $d$. The gating function $G:\mathbb{R}^d \to \mathbb{R}^E,$ is defined by a linear map followed by a softmax.

The top-$1$ MoE feedforward layer is then given as follows, and is applied independently at each position. For an input vector $x\in\mathbb{R}^d$, here $e^\ast(x)$ denotes the index of the expert.   
\begin{align*}
e^\ast(x) \;=\; \arg\max_{e\in\{1,\dots,E\}} \bigl(G(x)\bigr)_e \\
\mathrm{MoE}(x) \;=\; \mathrm{FF}_{e^\ast(x)}(x),
\end{align*}

For an input sequence $A\in\mathbb{R}^{n\times d}$, the MoE transformer layer with $H$ attention heads is defined as \[ A'  = \mathrm{MHA}(A) + A, \qquad 
A'' = \mathrm{MoE}(A') + A'. \]
A mixture-of-experts transformer model is obtained by stacking such layers and adding a linear output head, as in the dense case.

\section{Theoretical results.} \label{theorems}

\subsection{Preliminaries: knowledge circuits in transformers.}

We first investigate how structured knowledge is represented within dense transformers. A core premise of our framework is that they have the capacity to partition learned facts into localized, decoupled sub-networks, commonly referred to as \emph{knowledge circuits}. The following theorem formalizes this intuition using our synthetic data model, and shows that a single-layer transformer can encode knowledge, using a sparse circuits to perform the computation for any individual task. Because the size of this task-specific circuit depends only on the task's intrinsic complexity, this result provides the fundamental theoretical justification for task routing in MoE architectures, which can use a router to eliminate redundant computation. 

\begin{theorem}[Sparse circuits in a single–layer transformer]
\label{sparse-circuits}
Let $\Sigma$ be a finite alphabet, let  
$\mathcal{T} \subseteq(\Sigma\cup\{\star_{1},\star_{2}\})^{\ast}$ be any finite set of wildcard templates, each with length at most $L$. Let $\mathcal{K}$ be a finite set of knowledge tasks. For each task $\underline{k} \in \mathcal{K}$, let $\Delta_{\underline{k}} \subseteq\Sigma^{\ast}\times\Sigma^{\ast}$ 
be the corresponding dictionary with size $|\Delta_{\underline{k}}| < N$, and let $T(\underline{k}) \subseteq \mathcal{T}$ be the corresponding set of templates.

For any $\epsilon > 0$, there exists a transformer $\mathcal{M}$ with the following properties.

\begin{itemize}
\item The model $\mathcal{M}$ has one transformer layer with dimension $2L|\mathcal{T}|$. It consists of a multi-head attention layer with $2$ attention heads, followed by a feedforward network with at most $L|\mathcal{T}| + N|\mathcal{K}|$ neurons. This transformer block is then followed by a linear output layer.

\item For each task $\underline{k} \in \mathcal{K}$, there exists a \emph{sparse circuit} $C(\underline{k})$ with at most $L|T(\underline{k})|+N$ neurons in the hidden layer of the feedforward network that solves this task.
\[ \mathcal{M}_{C(\underline{k})}(x) = \mathcal{M}(x) \qquad\text{for every } x\in \mathcal{D}(\underline{k}) \]

\item The model can achieve an error of less than \(\epsilon\): for any prefix \(\underline{w}\), the model's output probability distribution \( \mathcal{M}(\underline{w}) = P \) satisfies the following. Here \(V(\underline{w}) \subseteq \Sigma\) denotes the set of all valid next tokens from the distribution \(D(\mathcal{K}, \mathcal{T})\). $$ \sum_{w \in V(\underline{w})} P(w) > 1 - \epsilon $$

\end{itemize}

\end{theorem}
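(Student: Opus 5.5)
We build $\mathcal{M}$ explicitly: one head reads off the template, the other reads off the subject, and the feedforward layer uses one group of neurons for template positions and one group for each dictionary. The embedding space has dimension $2L|\mathcal{T}|$, split into two blocks of size $L|\mathcal{T}|$. We index coordinates by pairs $(\tau,i)$ with $\tau\in\mathcal{T}$ and $i\le L$. Since $\Sigma$ may be larger than $2L|\mathcal{T}|$, tokens cannot have orthogonal embeddings. We therefore take token embeddings to be nearly orthogonal random-like vectors, or scaled vectors in general position, and add positional encodings. For simplicity we treat keys and values as single tokens; the multi-token case is handled the same way by attending over the subject span.

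\textbf{Attention.} Head~1 is designed to place, at the last position of a prefix $\underline{w}$, a vector in the first block. This vector encodes which templates $\tau$ and positions $i$ are consistent with the prefix so far. Concretely, with large query/key scaling (temperature $\to 0$), the head averages indicator features $e_{(\tau,i)}$ over the fixed (non-wildcard) tokens of the prefix. This produces a signature from which the pair $(\tau,i)$ is linearly separable, because distinct template prefixes produce distinct averages. Head~2 attends, with positional sharpness, to the subject token $k$ (the $\star_1$ position), which is known once $\tau$ is identified. It copies the embedding of $k$ into the second block. Because the attention weights can be made arbitrarily close to one-hot, the outputs are within any $\delta$ of their idealised values.

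\textbf{Feedforward.} The network has two groups of neurons.
\begin{itemize}
\item \emph{Template neurons} ($L|\mathcal{T}|$ in total): one neuron per $(\tau,i)$. Each is a thresholded ReLU that fires only when the head-1 signature equals that of $(\tau,i)$. Its output direction, through the linear head, puts large logit on the fixed token $w_{i+1}$ of $\tau$.
\item \emph{Dictionary neurons} ($N$ per task): one neuron per $(k,v)\in\Delta_{\underline{k}}$. Each fires when the second block matches $k$ and the first block indicates that the next symbol is $\star_2$ of some $\tau\in T(\underline{k})$. Its output adds a large logit to $v$.
\end{itemize}
The matching conditions are implemented as $\mathrm{ReLU}(\langle u,x\rangle - b)$, where the margin $b$ separates the correct key from all other keys; this is possible because the key embeddings have distinct inner products. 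Prefixes ending before $\star_1$, where the next token is any subject, are handled by a neuron that spreads mass uniformly over the admissible subjects.

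\textbf{Error bound and circuit.} Scaling all output logits by a factor $\lambda$ makes the softmax mass outside $V(\underline{w})$ at most $|\Sigma| e^{-c\lambda}<\epsilon$ for large $\lambda$. Overlapping templates, where several $(\tau,i)$ are consistent with a prefix, simply place large logits on several valid tokens, each of which lies in $V(\underline{w})$.

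For the circuit, let $C(\underline{k})$ consist of the two heads, the template neurons indexed by $\tau\in T(\underline{k})$, and the $N$ dictionary neurons of $\underline{k}$. This gives at most $L|T(\underline{k})|+N$ neurons. On inputs $x\in\mathcal{D}(\underline{k})$, every neuron outside $C(\underline{k})$ has negative pre-activation, so it outputs exactly zero, and hence $\mathcal{M}_{C(\underline{k})}(x)=\mathcal{M}(x)$.

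\textbf{Main obstacle.} The hardest step is guaranteeing that the thresholds give exact zeros for all out-of-circuit neurons, not merely small values. This must hold despite the $\delta$-perturbations from the softmax attention, and despite the subject vocabulary being larger than the $2L|\mathcal{T}|$ embedding dimension. I would first fix all margins combinatorially on idealised one-hot attention outputs. I would then choose the attention scaling large enough that $\delta$ is below half the minimal margin, so that every ReLU sign pattern is preserved exactly.
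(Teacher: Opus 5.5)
Your overall plan mirrors the paper's: a template head, a subject head, template neurons plus per-task dictionary neurons, and logit scaling. The gap is that your concrete head~1 does not separate the prefixes it needs to. Because it averages only over fixed (non-wildcard) positions, appending a wildcard filler leaves its output unchanged. For $\tau=$ ``$\star_1$ is in $\star_2$.'' the prefixes ``$k$ is in'' and ``$k$ is in $v$'' have the same fixed tokens, and head~2 sits on $k$ in both, so both attention outputs coincide. Hence ``distinct template prefixes produce distinct averages'' is false. Unless your gates also read the current token and position from the residual stream (as described, they do not), your template and dictionary neurons fire identically on two prefixes whose valid next tokens ($v$ versus the period) are disjoint. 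Likewise, the first block cannot tell a prefix ending just before $\star_1$ from the same prefix followed by $k$.

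Two smaller points about head~1. Values are linear in token-plus-position embeddings, so a uniform average cannot realise indicator features $e_{(\tau,i)}$: it sees only the multiset of fixed tokens and the set of fixed positions. And distinct signatures need not be isolable by a single $\mathrm{ReLU}(\langle u,x\rangle-b)$; each must be an exposed point of the signature set. The paper instead builds $a_1(\underline{w})=v_i(\tau)$ from all $i$ positions using only template and positional coordinates, so it depends on $i$. It also requires the $v_i(\tau)$ to be linearly independent, which supplies exactly the dual functionals that single-neuron isolation needs. Your margin argument for exact zeros under $\delta$-perturbed attention is fine, but it presupposes idealised signatures that already separate the right prefixes.

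Second, head~2 cannot attend positionally to ``the $\star_1$ position, which is known once $\tau$ is identified''. In a single layer both heads read the raw embeddings in parallel. So head~2's query at the last position depends only on the last token and its position, not on head~1's output, and these do not locate $\star_1$: compare ``$a\,\star_1\,b\,c\,\star_2$'' with ``$\star_1\,a\,b\,c\,\star_2$'' at position~$4$. The paper selects the subject by content instead. It partitions the embedding into template-token and dictionary-token coordinates and restricts head~2's query and key to the latter.

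Third, since $|\Sigma|$ can exceed $2L|\mathcal{T}|$, the logits lie in a low-rank image, so ``a direction that puts a large logit on $t$'' needs something like Lemma~\ref{unembedding}. Then the argmax of a sum of several such directions need not be the target of any summand. Writing $r_t$ for the vector of that lemma attached to token $t$, $r_t+r_{t'}$ peaks at the token nearest the bisector of their angles, which can be neither $t$ nor $t'$. This breaks your overlapping-template step. It also bites when a template neuron and a dictionary neuron fire at a shared prefix, or when a subject has two objects. You need a single dominant neuron per distinct situation.

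Indexing template neurons by distinct prefixes rather than by $(\tau,i)$ is also required for the circuit claim. As written, the neuron for $(\tau,i)$ with $\tau\notin T(\underline{k})$ fires on inputs from $\mathcal{D}(\underline{k})$ whenever $\tau$ shares that prefix with a template of $\underline{k}$.
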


This theoretical construction provides a formal mathematical foundation for the empirical phenomena recently observed by \cite{knowledge-circuits}, which demonstrate that large, pretrained language models naturally encode specific relational facts within highly localized, sparse sub-networks, which they term ``knowledge circuits.'' While their findings rely on empirical techniques like causal tracing to locate these circuits, Theorem \ref{sparse-circuits} proves that transformers have the structural capacity to modularize knowledge using sparse knowledge circuits. 

\subsection{Main results: task routing in MoE transformers.}

Recent empirical studies in mechanistic interpretability have demonstrated that MoE transformers naturally develop specialized experts that handle distinct knowledge tasks \cite{MoE-knowledge}. These findings lack a formal guarantee that such MoE models can perfectly isolate discrete factual associations, and the following theorem bridges this gap by providing a mathematically rigorous foundation for these empirical observations. We establish that a single-layer MoE transformer can explicitly partition symbolic knowledge into distinct, task-specific experts without entanglement. Crucially, our construction proves that the required capacity of each expert scales strictly with the complexity of its assigned templates and dictionary. 

\begin{theorem}[Task routing in a single–layer mixture-of-experts transformer] \label{MoE}
Let $\Sigma$ be a finite alphabet, let  
$\mathcal{T} \subseteq(\Sigma\cup\{\star_{1},\star_{2}\})^{\ast}$ be any finite set of wildcard templates, each with length at most $L$. Let $\mathcal{K}$ be a finite set of knowledge tasks. For each task $\underline{k} \in \mathcal{K}$, let $\Delta_{\underline{k}} \subseteq\Sigma^{\ast}\times\Sigma^{\ast}$ 
be the corresponding dictionary with size $|\Delta_{\underline{k}}| < N$, and let $T(\underline{k}) \subseteq \mathcal{T}$ be the corresponding set of templates.

For any $\epsilon > 0$, there exists a transformer $\mathcal{M}$ with the following properties.

\begin{itemize}

\item The model $\mathcal{M}$ has one transformer layer with dimension $2L|\mathcal{T}|$. It consists of a multi-head attention layer with $2$ attention heads, followed by a mixture-of-experts layer with experts indexed by $\mathcal{K}$. The expert corresponding to $\underline{k} \in \mathcal{K}$ has at most $L|T(\underline{k})|+N$ neurons. This transformer layer is then followed by a linear output layer.

\item Given an input sequence $\underline{w}$ consisting of the first $i$ words generated by a template $\tau \in T(\underline{k})$, the transformer routes the sequence to the expert corresponding to $\underline{k}$.

\item The model can achieve an error of less than \(\epsilon\) in predicting the valid next words. Specifically, for any prefix \(\underline{w}\), the model's output probability distribution \(P\) satisfies the following. Here \(V(\underline{w}) \subseteq \Sigma\) denotes the set of all valid next tokens from the distribution \(D(\mathcal{K}, \mathcal{T})\). $$ \sum_{w \in V(\underline{w})} P(w \mid \underline{w}) > 1 - \epsilon $$ \end{itemize}

\end{theorem}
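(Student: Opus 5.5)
We build the MoE model directly out of the dense construction of Theorem~\ref{sparse-circuits}. The key fact is that the feedforward network there already splits into circuits $C(\underline{k})$, with a separate one for each task. So the plan is to reuse the embedding, the two attention heads and the output layer from Theorem~\ref{sparse-circuits} without change. The expert for $\underline{k}$ is then taken to be the restriction of the feedforward network to the neurons of $C(\underline{k})$. That gives at most $L|T(\underline{k})|+N$ neurons per expert, as required.

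The hidden dimension $2L|\mathcal{T}|$ suggests what the two heads compute. Roughly, one head attends to the template tokens seen so far and writes a one-hot code of (template, position) into the first $L|\mathcal{T}|$ coordinates. The other head copies the subject token filled in at $\star_1$, so the dictionary lookup can use it. Both heads should be taken verbatim from Theorem~\ref{sparse-circuits}.

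Next comes the router $G$. Write $x$ for the post-attention vector at the final position. Its template block encodes the set of templates $\tau\in\mathcal{T}$ that are consistent with the prefix $\underline{w}$. We choose the gating matrix so that row $\underline{k}$ is a large multiple $\lambda$ of the indicator of the coordinates (template, position) with template $\tau\in T(\underline{k})$. For a prefix generated by $\tau\in T(\underline{k})$, the gate score for $\underline{k}$ then counts the consistent templates belonging to $\underline{k}$. This makes $\underline{k}$ the argmax when the consistent templates all lie in $T(\underline{k})$.

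The main obstacle is ambiguity. A template may belong to several tasks, and a short prefix may be consistent with templates of different tasks. In that case the routing claim cannot be strict, and I would handle it in two ways. First, once the wildcard $\star_1$ has been filled, I would append the subject coordinate to the router input: a key lying in $\Delta_{\underline{k}}$ separates tasks that share a template. Second, for prefixes that remain genuinely ambiguous, any expert among the tied tasks suffices, since each such expert contains the template-continuation neurons for the shared templates. I would fix the tie-breaking by a small perturbation of the gating rows so that the argmax is well defined.

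For correctness, suppose $\underline{w}$ is routed to $\underline{k}$ and consists of the first $i$ words of some $\tau\in T(\underline{k})$. Then
\[ \mathrm{MoE}(x)=\mathrm{FF}|_{C(\underline{k})}(x)=\mathcal{M}_{C(\underline{k})}\text{'s feedforward output}, \]
and by Theorem~\ref{sparse-circuits} this equals the dense model's output on $\mathcal{D}(\underline{k})$. The logits therefore coincide with those of $\mathcal{M}$, and the bound $\sum_{w\in V(\underline{w})}P(w\mid\underline{w})>1-\epsilon$ is inherited directly.

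The step to check carefully is that the identity $\mathcal{M}_{C(\underline{k})}=\mathcal{M}$ holds on every prefix, not only on complete sentences. This should follow because the neurons outside $C(\underline{k})$ are gated to be inactive, i.e.\ have zero ReLU output, on inputs from other tasks. If the dense proof establishes this only for full sentences, I would extend it by the same neuron-by-neuron argument to every prefix position.
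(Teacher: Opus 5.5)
\textbf{The gap: the router rests on a representation you never construct.} Routing is the only new ingredient beyond Theorem~\ref{sparse-circuits}. Your gate score ``counts the consistent templates of $\underline{k}$'' only if the template block of $x$ is literally the $0/1$ indicator of the (template, position) pairs consistent with $\underline{w}$.

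You attribute this to the heads of Theorem~\ref{sparse-circuits}, but that is not what they provide. Those heads only produce vectors $v_i(\tau)$ chosen generically so as to be linearly independent; they are not indicator codes.

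Your two descriptions of the head are also incompatible:
\begin{itemize}
\item A one-hot code of a single (template, position) is impossible once two templates share a prefix, since the head output depends only on the prefix.
\item It is unclear that one softmax head can output the indicator of the consistent set at all. Its output is a convex combination $\sum_j \alpha_j x_j W_V$, with one set of weights shared by every coordinate, whereas consistency with $\tau$ is a conjunction over every position.
\end{itemize}
If the coordinates are only, say, averaged match scores, your rows can let a task with several partially matching templates outscore the task whose template matches exactly.

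The repair is the paper's device. Use only that the distinct prefix representations are linearly independent (distinct prefixes, not pairs $(\tau,i)$). A linear gate can then send each one to any prescribed vector, for instance $\lambda$ times the indicator of the set of tasks owning a template consistent with that prefix.

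\textbf{On ambiguity you are right, and more careful than the paper.} The second bullet cannot hold as stated when a template lies in two tasks, or when templates of different tasks share a prefix.

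The paper's proof overlooks this. It indexes head outputs by $(\tau,i)$, asserts they are linearly independent, and sets the gate weights on $a_2$ to zero. That already fails on the paper's own example: ``Paris is'' and ``English is'' give the template head identical inputs, yet must go to different experts.

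Your subject coordinate is exactly the missing ingredient, but three things must be made explicit:
\begin{enumerate}
\item State that you prove the weakened claim (routing to some task consistent with $\underline{w}$), or add a hypothesis under which prefix and key determine the task.
\item Weight the subject term so that the argmax is a task consistent with both the template prefix and the key. Only such an expert is guaranteed to emit a valid object at the $\star_2$ position. With your counting scores this needs roughly $\mu > \lambda(|\mathcal{T}|-1)$; with the $0/1$ task indicator above, any positive weights work.
\item Fold shared keys into the tie case.
\end{enumerate}

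With these in place, reusing $C(\underline{k})$ as the expert is a harmless difference from the paper, which rebuilds each expert from Lemma~\ref{memorize}. You still need to establish the prefix-level circuit identity you flagged.
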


While recent theoretical work has analyzed the dynamics of optimization in MoE models \cite{MoE-theory, MoE-continual}, our result focuses on expressiveness and diverges from these frameworks in two critical aspects. First, these studies analyze isolated MoE layers, consisting solely of a router and feedforward experts, without using attention layers. Second, they define tasks using clustering approaches, such as Gaussian mixtures, which do not effectively represent natural language processing tasks. In contrast, Theorem \ref{MoE} explicitly integrates the multi-head attention mechanism with the MoE layer to process data, using a symbolic data model built on structural templates and relational dictionaries.


\section{Proofs of key results} \label{proofs}

In this section, we analyze the expressiveness of MoE transformer models and establish a rigorous mathematical foundation for explicit task-expert specialization. We prove that while dense transformers have the structural capacity to encode facts using sparse knowledge circuits, mixture-of-experts architectures can explicitly modularize this computation by routing discrete tasks to specialized experts whose capacity scales strictly with task complexity.

\subsection{Proof of Theorem \ref{sparse-circuits}.}

In this section, we outline the proof of Theorem \ref{sparse-circuits}. Given an input sequence $\underline{w} = (w_j)_{1 \leq j \leq i}$, and an index $c \in \{ 1, 2 \}$, we denote by $a_{c}(\underline{w}) \in \mathbb{R}$ the image of $\underline{w}$ in the corresponding attention head. We denote by $z'(\underline{w}) \in \mathbb{R}^{2L|\mathcal{T}|}$ the image after concatenating and passing through the $W_O$ matrix, and $z(\underline{w}) \in \mathbb{R}^{2L|\mathcal{T}|}$ the image after passing through the feedforward network. 

\begin{proof}

\textbf{Construction of embedding layer.} We partition the embedding dimension into two blocks: one dedicated to encoding the tokens that appear as keys or values in the dictionaries, and the other for the syntactic tokens forming the templates. 

\textbf{Construction of attention heads.} Let the input sequence $\underline{w}$ consist of the first $i$ words generated by a template $\tau \in T(\underline{k})$ and a subject-object pair $(k, v) \in \Delta_{\underline{k}}$, for a given task $\underline{k} \in \mathcal{K}$. We choose the query, key and value matrices of the first attention head so that $a_1(\underline{w}) = v_i(\tau) \in \mathbb{R}^{L|\mathcal{T}|}$ is a vector that depends on the template $\tau$ and the index $i$, and not the subject $k$. We also choose the weights so that the vectors $\{ v_i(\tau) \}$ are linearly independent in $\mathbb{R}^{2L|\mathcal{T}|}$. We choose the query, key and value matrices of the second attention head so that $a_2(\underline{w})$ is a vector that depends only on the subject $k$. We also set $W_O$ to be the identity matrix, so that $z'(\underline{w})$ is obtained by concatenating $a_1(\underline{w})$ and $a_2(\underline{w})$. 

\textbf{Construction of feedforward network.} 

We decompose the neurons $I$ in the hidden layer into $2|\mathcal{K}|$ blocks as follows.
$$ I = \bigsqcup_{\underline{k} \in \mathcal{K}} I_{\underline{k}} \cup \bigsqcup_{\underline{k} \in \mathcal{K}} I'_{\underline{k}} $$

For each $\underline{k} \in \mathcal{K}$, the sparse circuit $C(\underline{k})$ is obtained by selecting neurons in $I_{\underline{k}}$, which are used for syntactic tokens, and $I'_{\underline{k}}$ which are used for factual recall. To enforce strict circuit isolation, large negative biases are applied to all off-task neurons, ensuring that the ReLU activations are zero for any task $\underline{k}' \neq \underline{k}$. The syntactic block $I_{\underline{k}}$ connects exclusively to the output of the first attention head, while the semantic block $I'_{\underline{k}}$ connects exclusively to the output of the second attention head.

We use Lemma \ref{memorize} to construct the weights of these blocks, which memorize the respective mappings. The syntactic block has $|I_{\underline{k}}| = L|T(\underline{k})|$ neurons to map all unique template prefixes to vectors that encode next-word tokens, while the semantic block utilizes $|I'_{\underline{k}}| \leq N$ neurons to map keys to vectors that encode their corresponding values. 

\textbf{Construction of the output layer.} 

\begin{lemma} \label{unembedding}
For every integer $n \ge 3$, there exist vectors $v_1,\dots,v_n \in \mathbb{R}^n$ spanning a subspace of dimension $2$ such that, for each $i \in \{1,\dots,n\}$, the $i$-th coordinate of $v_i$ is the unique maximal coordinate of $v_i$. \end{lemma}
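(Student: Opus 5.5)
Place $n$ points on a circle in a two-dimensional plane and take each $v_i$ to be the vector of inner products of one fixed point with all of the points. The points are strictly convex in position, so each one is the unique maximizer of its own linear functional. Concretely, choose distinct angles $\theta_1,\dots,\theta_n\in[0,2\pi)$ (for instance $\theta_j = 2\pi j/n$) and set $u_j=(\cos\theta_j,\sin\theta_j)\in\mathbb{R}^2$. Let $U\in\mathbb{R}^{n\times 2}$ be the matrix whose $j$-th row is $u_j$, and define
\[ v_i \;=\; U u_i \in \mathbb{R}^n, \qquad (v_i)_j = \langle u_j,u_i\rangle = \cos(\theta_j-\theta_i). \]

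\textbf{Step 1 (unique maximum).} For fixed $i$, we have $(v_i)_i=1$. For $j\neq i$, $(v_i)_j=\cos(\theta_j-\theta_i)<1$, because $\theta_j-\theta_i\notin 2\pi\mathbb{Z}$ for distinct angles in $[0,2\pi)$. So the $i$-th coordinate is the unique maximal coordinate of $v_i$. Equivalently, by Cauchy--Schwarz, $\langle u_j,u_i\rangle\le 1$ with equality only when $u_j=u_i$.

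\textbf{Step 2 (span has dimension exactly 2).} Every $v_i$ lies in the column space of $U$, which has dimension at most $2$. To see the dimension is exactly $2$, take two angles that are not congruent modulo $\pi$, say $\theta_1,\theta_2$ with $\theta_2-\theta_1\notin\pi\mathbb{Z}$; these exist because $n\ge 3$ and the angles $2\pi j/n$ are distinct. Then $u_1,u_2$ are linearly independent, so $U$ has rank $2$. Hence $v_1=Uu_1$ and $v_2=Uu_2$ are linearly independent, and the span is exactly the $2$-dimensional column space of $U$.

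\textbf{Main obstacle.} The only delicate point is Step 2, namely ensuring the span is not $1$-dimensional. This is exactly where the hypothesis $n\ge3$ is used: with $n=2$ and antipodal points, $v_2=-v_1$. In that case one could still pick non-antipodal angles, but the statement only requires $n\ge 3$, and the uniform choice $\theta_j=2\pi j/n$ then automatically contains two non-antipodal directions. In the surrounding proof, the lemma lets the output layer map a $2$-dimensional feature (scaled by a large constant $\lambda$) to logits whose softmax concentrates mass $>1-\epsilon$ on the intended token. Scaling $v_i$ by $\lambda$ preserves the unique-argmax property and drives the softmax to the indicator of coordinate $i$ as $\lambda\to\infty$.
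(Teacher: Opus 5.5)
Your proof is correct and is essentially the paper's construction: the paper's vectors $u=(\cos\theta_j)_j$ and $w=(\sin\theta_j)_j$ are exactly the two columns of your $U$, and its $v_k=\cos\theta_k\,u+\sin\theta_k\,w$ is your $Uu_k$, with $(v_k)_j=\cos(\theta_k-\theta_j)$. Your Step 2 is slightly more complete than the paper's, which only notes that $u,w$ are independent; you use injectivity of the rank-$2$ map $U$ to show the $v_i$ actually span a $2$-dimensional space. One minor nit: your example gives $\theta_n=2\pi\notin[0,2\pi)$, but only distinctness modulo $2\pi$ is used, so nothing breaks.
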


To construct the output logits, we use the above Lemma, which implies that for each token in the vocabulary, we can construct a vector within a 2-dimensional subspace of the output logits, such that the maximal coordinate of that vector is indexed by the corresponding token. We construct the matrix $W_U$ so that its output is this 2-dimensional subspace, and choose the target outputs of the feedforward network above to be the pre-image of these vectors under the mapping induced by $W_U$. By applying a sufficiently large scaling factor to the $W_U$, we can ensure that the resulting softmax probabilities for the target token are larger than $> 1 - \epsilon$ for a given $\epsilon$, concluding the proof. \end{proof}

\begin{figure*}[!ht]
    \centering
    \begin{subfigure}[t]{0.32\linewidth}
        \centering
        \includegraphics[width=\linewidth]{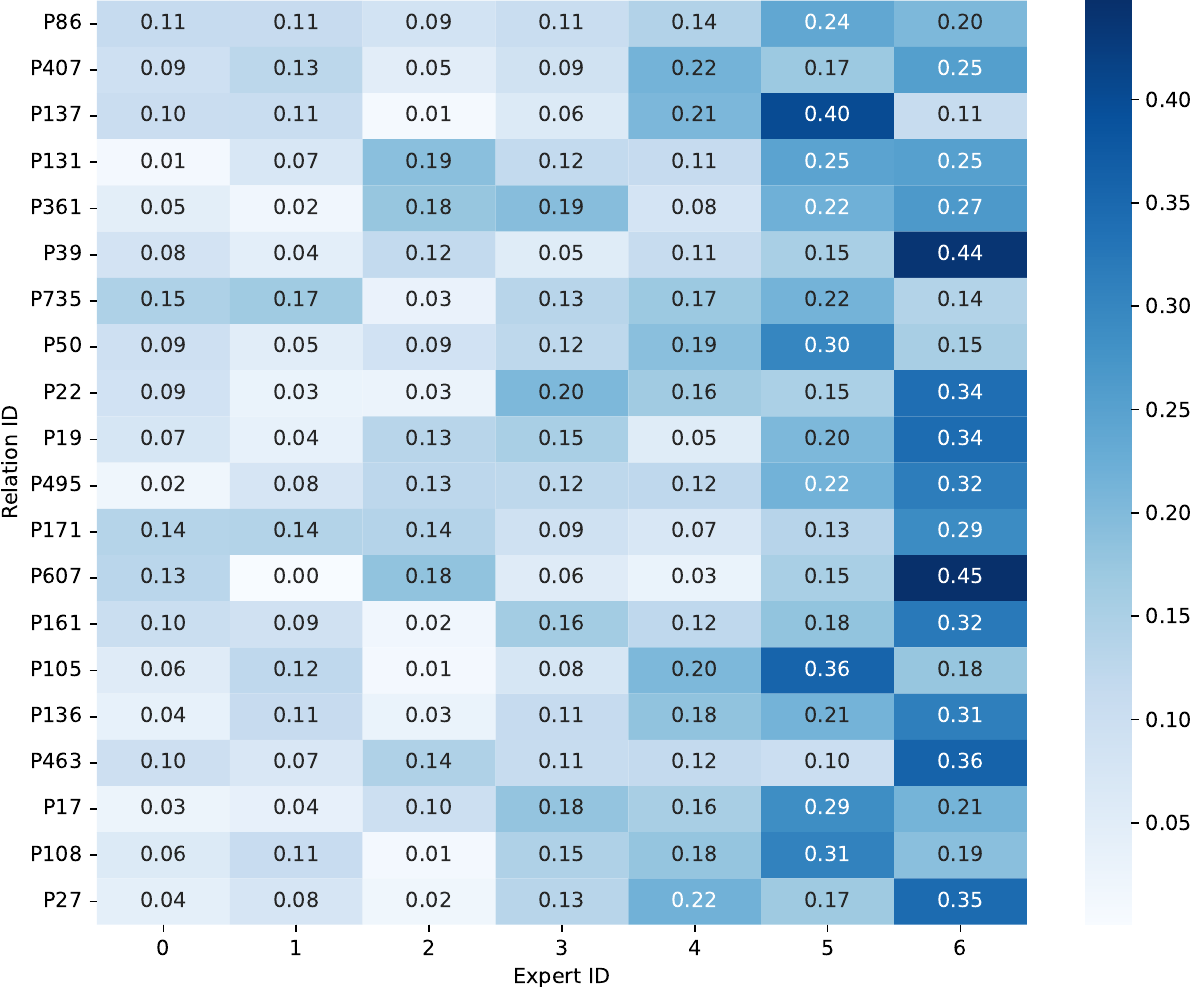}
        \caption{CE loss only}
        \label{fig:ce_heatmap}
    \end{subfigure}
    \hfill
    \begin{subfigure}[t]{0.32\linewidth}
        \centering
        \includegraphics[width=\linewidth]{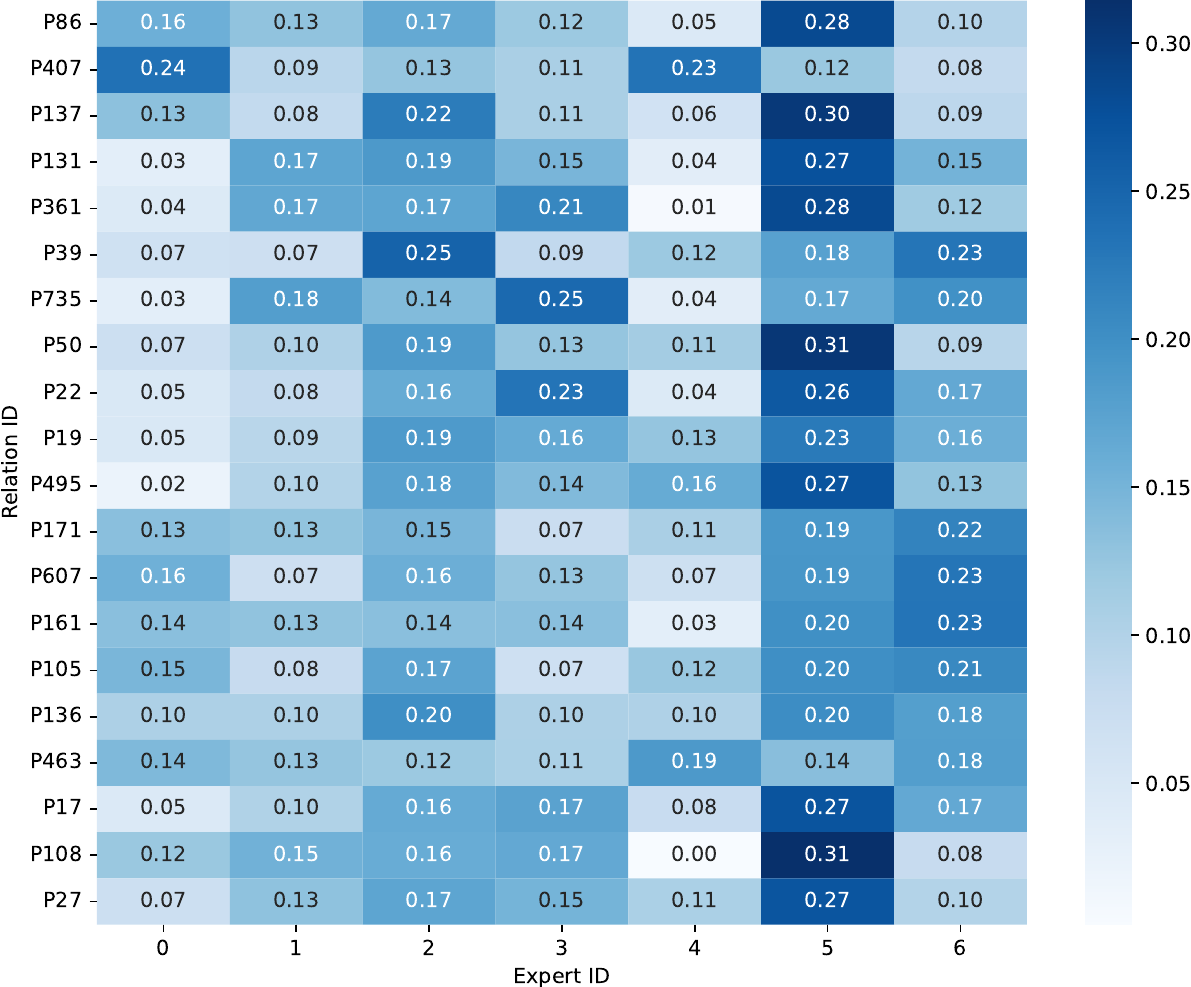}
        \caption{CE + Load Balancing loss}
        \label{fig:lb_heatmap}
    \end{subfigure}
    \hfill
    \begin{subfigure}[t]{0.32\linewidth}
        \centering        \includegraphics[width=\linewidth]{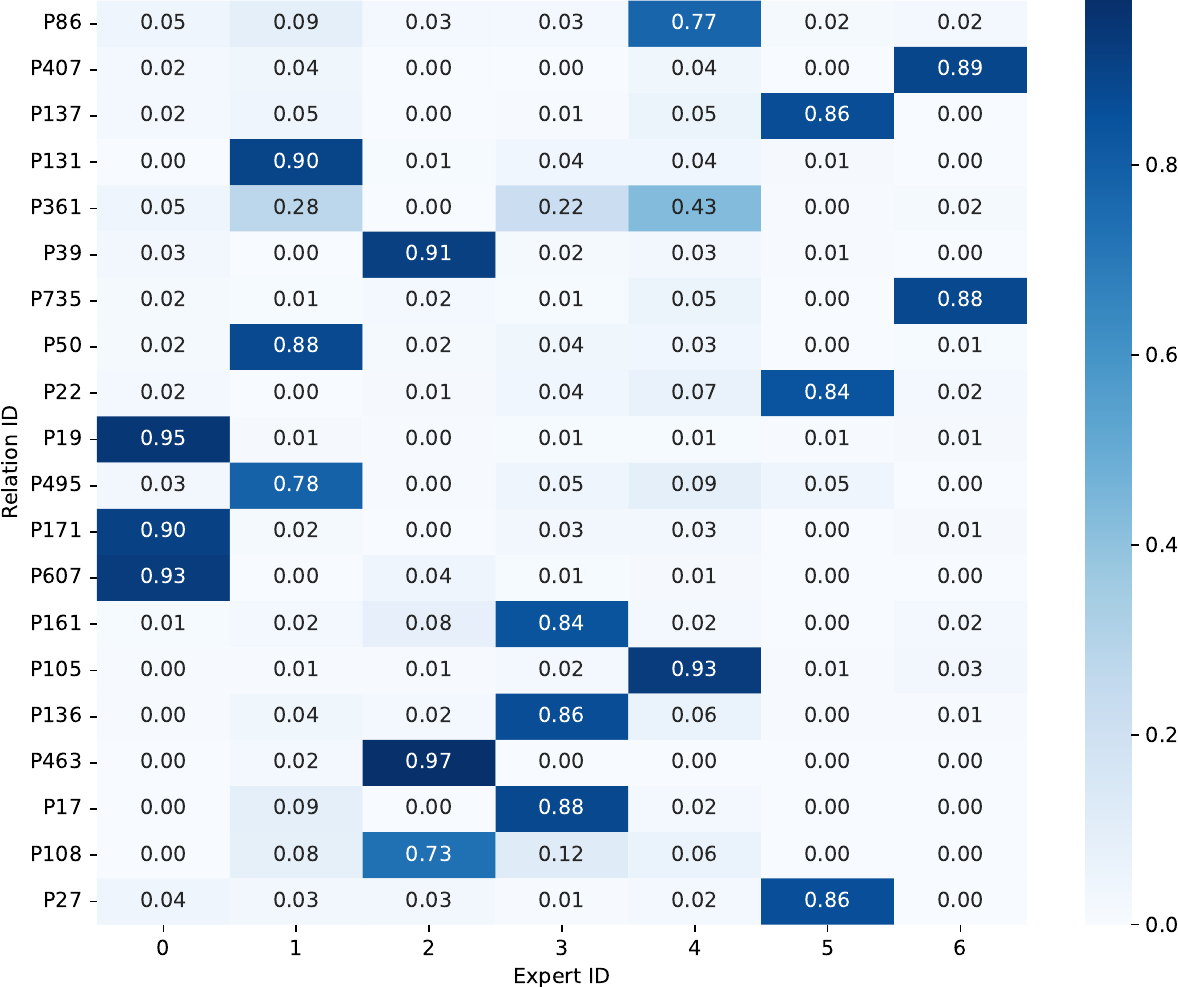}
        \caption{CE + Router loss}
        \label{fig:router_heatmap}
    \end{subfigure}
    \caption{Task-Expert routing distributions under different training objectives with 20 tasks and 7 experts.}
    \label{fig:heatmap}
    \vspace{-4mm}
\end{figure*}

\subsection{Proof of Theorem \ref{MoE}.}

In this section, we outline the proof of Theorem \ref{MoE}. Let $\underline{w} = (w_j)_{1 \leq j \leq i}$ be an input sequence corresponding to a task $\underline{k} \in \mathcal{K}$, that consists of the first $i$ words generated by a template $\tau \in T(\underline{k})$ and a subject-object pair $(k, v) \in \Delta_{\underline{k}}$. Given an index $c \in \{ 1, 2 \}$, we denote by $a_{c}(\underline{w}) \in \mathbb{R}$ the image of $\underline{w}$ in the corresponding attention head. We denote by $z'(\underline{w}) \in \mathbb{R}^{2L|\mathcal{T}|}$ the image after concatenating and passing through the $W_O$ matrix, and $z(\underline{w}) \in \mathbb{R}^{2L|\mathcal{T}|}$ the image after passing through the feedforward network. We partition the vectors $z(\underline{w})$ and $z'(\underline{w})$ into two block components, each of dimension $L|\mathcal{T}|$. 

\begin{proof}

\textbf{Construction of attention heads and embedding layer.} We follow the approach used in the proof of Theorem \ref{sparse-circuits} to construct the embedding layer, and choose the query, key and value matrices so that $a_1(\underline{w}) = v_i(\tau) \in \mathbb{R}^{L|\mathcal{T}|}$ is a vector that depends only on the template $\tau$ and the index $i$ (and not the subject $k$), while $a_2(\underline{w})$ depends only on the subject $k$. Our choice of these matrices also ensures that the vectors $\{ v_i(\tau) \}$ are linearly independent in $\mathbb{R}^{L|\mathcal{T}|}$.

\textbf{Construction of the router.} We construct a router $G$ with the property that for the input sequence $\underline{w}$, the chosen expert corresponds to the task $\underline{k}$. To do this, we choose the gating function $G: \mathbb{R}^{2L|\mathcal{T}|} \rightarrow \mathbb{R}^{|\mathcal{K}|}$ so that it only depends on the first $L|\mathcal{T}|$ coordinates of the input vector. Since the vectors $\{ v_i(\tau) \}$ are linearly independent, we can choose the linear map so that for each $i$ and template $\tau \in T(\underline{k})$, $G(v_i(\tau))$ is a one-hot vector in $\mathbb{R}^{|\mathcal{K}|}$ indexed by $\underline{k}$. 

\textbf{Construction of the experts.} To construct the expert for each task $\underline{k} \in \mathcal{K}$, we separate its neurons into two blocks, and follow the approach used in the proof of Theorem \ref{sparse-circuits}. The first block is responsible for identifying the next word in the template $\tau$, and maps the image of current template $v_i(\tau) \in \mathbb{R}^{L|\mathcal{T}|}$ to an embedding of the next word. The second block functions as an associative memory responsible for retrieving factual associations from the knowledge table $\Delta_{\underline{k}}$ using the representation $a_2(\underline{w})$ of the key $k$ for every dictionary entry $(k, v) \in \Delta_{\underline{k}}$. 

\textbf{Output logits and probability bound.} Finally, we map the expert's output $z(\underline{w})$ to a probability distribution over the vocabulary $\Sigma$ using a linear unembedding matrix $W_U$, following the proof of Theorem \ref{sparse-circuits}. When using Lemma \ref{memorize} above, for each of the two blocks we choose the outputs so that they lie in a $2$-dimensional subspace of $\mathbb{R}^{L|\mathcal{T}|}$.  We then construct a linear mapping $W_U$ between these two-dimensional subspaces, so that the logits $l = W_U z'(\underline{w})$ satisfy the following property: the logit for the valid next token $w \in V(\underline{w})$ is strictly greater than the logit for any invalid token $y \notin V(\underline{w})$. To satisfy the theorem's error bound, we apply a sufficiently large scaling factor $\gamma > 0$ to the weight matrix $W_U$, so that the model predicts the correct next tokens with a total probability greater than $1 - \epsilon$. \end{proof} 

\section{Experiments.} \label{experiments}

\begin{figure*}
    \centering    \includegraphics[width=1.0\linewidth]{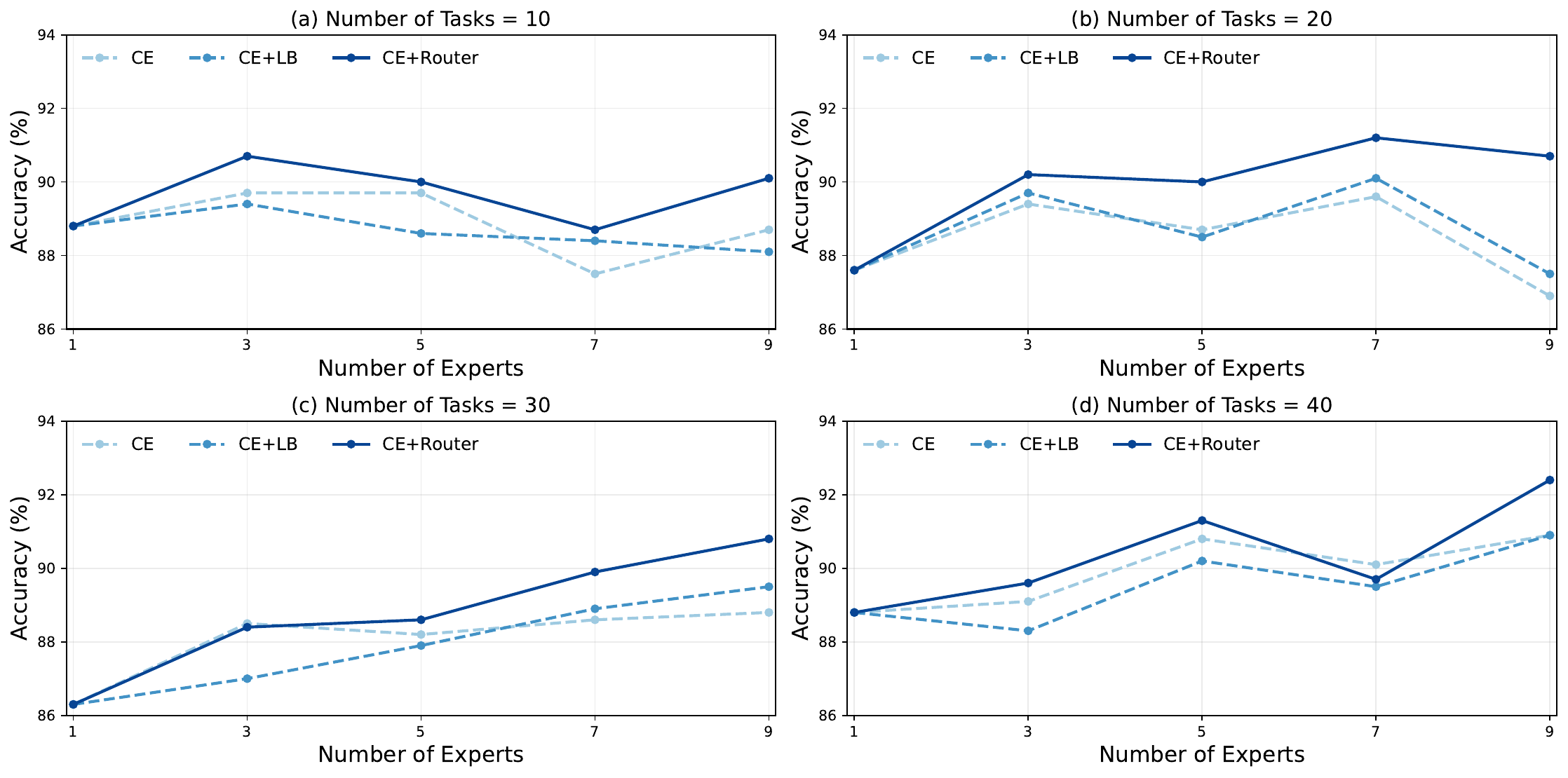}
    \caption{Accuracy under different training objectives across varying numbers of tasks and experts. CE$+$Router achieves comparable performance in most settings.}
    \label{fig:performance_plot}
    \vspace{-4mm}
\end{figure*}

\subsection{Dataset}

We construct a synthetic knowledge dataset from the setup in Section \ref{data}, using entity pairs and natural language templates from WikiData5M \cite{wikidata5M}, which is publicly available under an open-source license. Our use of WikiData5M is consistent with its intended use for academic research, and the synthetic dataset is similarly intended solely for research purposes. We sample 50 knowledge tasks from WikiData5M, where each task $\underline{k}$ is indexed by a relation IDs. The templates $T(\underline{k})$ are generated by sampling $5$ sentences from the relation ID using Gemini-2.5. The corresponding knowledge table $D(\underline{k})$ is obtained by sampling $200$ entity pairs for that relation. We tokenize each sentence using a custom tokenizer that preserves quoted entity names as atomic tokens. The dataset is then split into training and test subsets, with an $80\%-20\%$ train-test split. 

\subsection{Models} \label{moe-models}

We use a one-layer mixture-of-experts transformer model, following the setup in Section \ref{transformer-model}. We train the model on the synthetic dataset obtained from WikiData5M using stochastic gradient descent. We evaluate the model by autoregressively generating sequences from randomly truncated sentences from the test set, until the end-of-sentence token is produced or a maximum length is reached. The model's accuracy is the proportion of generated sequences that exactly match sentences in the dataset. We refer the reader to Appendix \ref{Appendix-B} for more details about the experimental setup. 

We optimize expert utilization using two distinct auxiliary objectives. The load balancing loss \cite{shazeer17, switch22} encourages an even distribution of tokens across all available experts to prevent routing bottlenecks. Our task router loss explicitly enforces task-specific routing by training the model to route each token directly to the expert associated with that task.

For a batch of $N$ tokens and $E$ experts, let $p_{x,e}$ and $z_{x,e}$ denote the routing probability and one-hot assignment, respectively, for token $x \in \{1, \dots, N\}$ at expert $e \in \{1, \dots, E\}$. Let $e_x \in \{1, \dots, E\}$ be the target task expert index for token $x$. The loss functions are defined as follows:
\[ \mathcal{L}_{\mathrm{LB}} = E \cdot \sum_{e=1}^E \left( \frac{1}{N} \sum_{x=1}^N p_{x,e} \right) \left( \frac{1}{N} \sum_{x=1}^N z_{x,e} \right) \]
\[ \mathcal{L}_{\text{router}} = - \frac{1}{N} \sum_{x=1}^{N} \log p_{x, e_x} \]
The total training objective is $\mathcal{L} = \mathcal{L}_{\text{CE}} + \lambda \mathcal{L}_{\text{aux}}$, where $\mathcal{L}_{\text{CE}}$ is the cross-entropy loss, $\mathcal{L}_{\text{aux}} \in \{ \mathcal{L}_{\mathrm{LB}}, \mathcal{L}_{\text{router}} \}$ is the chosen auxiliary objective, and $\lambda$ is a weighting coefficient (e.g., $\lambda = 0.2$).

\subsection{Routing specialization in trained mixture-of-experts models}


We examine the relationship between tasks and experts by visualizing token-level routing distributions across different objectives in Figure~\ref{fig:heatmap}. While training with cross-entropy alone (Figure~\ref{fig:ce_heatmap}) or with an additional load balancing loss (Figure~\ref{fig:lb_heatmap}) results in diffuse routing patterns that lack clear task--expert alignment, the task routing loss yields more uniform expert utilization. In contrast, incorporating task-based routing (Figure~\ref{fig:router_heatmap}) leads to substantially clearer correspondence, where each task is predominantly routed to a specific expert. 

\subsection{Model Performance}


We compare the overall model performance under the three different training objectives in Figure~\ref{fig:performance_plot}. As we vary the number of tasks and experts, the three variants achieve comparable accuracy, indicating that enforcing task-specific routing does not degrade predictive performance. Notably, the model trained with task-based routing typically achieves a slightly higher accuracy when more experts are present. These results suggest that explicit task-specific routing can also reduce the active parameter count while maintaining model performance (see Appendix~\ref{Appendix-B} for more details). 

\section{Conclusion.}

In this work, we establish a rigorous theoretical foundation for task-expert specialization in MoE transformers. By using a synthetic framework based on templates, we prove that a single-layer MoE transformer can solve this language modelling task using task-specific experts whose size scales with task complexity. Our experiments show that, in simplified settings with synthetic data, task-level routing encourages specialized experts to store knowledge. Our framework provides a mathematical blueprint for designing interpretable mixture-of-expert architectures with task-specialized experts, that could yield inference-time speedups by reducing the active parameter count. 

\section{Limitations.}

\subsection{Theoretical models for deep mixture-of-experts transformers.}

Our construction could be generalized to deeper mixture-of-experts (MoE) transformers by replacing templates with sentences generated using context-free grammars that can model the hierarchical structure of language \cite{zhao23}. In this setting, sentences are generated by derivation trees rather than single-step wildcard instantiations, with non-terminals corresponding to phrases in the sentence \cite{chomsky}. In a deep MoE transformer, earlier attention layers can be used to parse grammar, and later expert layers route tokens to experts that encode knowledge \cite{MoE-knowledge}. Extending this hierarchical formulation could provide a theoretical mechanism for modeling multi-step reasoning in transformer models, where deeper MoE layers sequentially route intermediate tokens through specialized experts to obtain a reasoning chain \cite{merrill2024}. 

\subsection{Sequence-level routing in mixture-of-experts models.}

Our theoretical framework supports the thesis in \cite{taskMoE}, which shows that routing entire inputs in mixture-of-expert models based on predefined task labels explicitly isolates domain knowledge into specialized sub-networks. While this approach relies on task labels for input data, in \cite{MoE-design} the authors show that sequence-level routing in a self-supervised setting is a viable alternative. It would be interesting to empirically compare our simplified experiments from Section \ref{moe-models} using token-level task routing in mixture-of-experts with the sequence-level routing approach from \cite{MoE-design}. In this paradigm, the gating network computes a single, global routing decision for the sequence, typically by pooling token representations, thereby forcing all tokens within that sequence to be processed by the same chosen expert without requiring task labels. In \cite{MoE-design}, it is observed that while token-level routing often causes experts to specialize in shallow syntactic features, sequence-level routing encourages experts to capture higher-level, domain-specific semantic concepts.

\section*{Acknowledgments}
We would like to thank Goncalo Paulo and Catherine Arnett for insightful discussions on transformer circuits. We utilized an AI assistant, Gemini 3 Pro, to iteratively draft and refine prose throughout the manuscript. We carefully reviewed and edited all generated text.

\vspace{0.5em}


\bibliography{references}

\appendix

\section{Appendix: Proofs}

\subsection{Definitions} \label{def}

\begin{definition} A position-wise feedforward network (FFN) with input/output width \(d\) and hidden width \(d_{\mathrm{ff}}\) is the function
\[
\mathrm{FF}:\mathbb{R}^d\to\mathbb{R}^d,\;
x\mapsto W^{(2)}\sigma(W^{(1)}x+b^{(1)})+b^{(2)},
\]
with \(W^{(1)}\in\mathbb{R}^{d_{\mathrm{ff}}\times d}\), \(b^{(1)}\in\mathbb{R}^{d_{\mathrm{ff}}}\), \(W^{(2)}\in\mathbb{R}^{d\times d_{\mathrm{ff}}}\), \(b^{(2)}\in\mathbb{R}^d\), and \(\sigma\) is the ReLU applied element-wise.
\end{definition}

\begin{definition} A transformer layer with $H$ heads and width $d$ is a function defined as follows.
\[ \text{Layer}: \mathbb{R}^{n \times d} \to \mathbb{R}^{n \times d}, \text{Layer}(A) = A'' \]
\[ A' = \text{MHA}(A) + A \]
\[ A'' = \text{FF}(A') + A' \]
Here $\text{MHA}$ is the multi-head attention mechanism, and $\text{FF}$ is a position-wise feedforward network (FFN). The terms $+A$ and $+A'$ are residual connections. $\blacksquare$
\end{definition} 

\subsection{Lemmas}

We start with a proof of Lemma \ref{unembedding} above.

\begin{lemma}
For every integer $n \ge 3$, there exist vectors $v_1 , \dots, v_n \in \mathbb{R}^n$ spanning a subspace of dimension $2$ such that, for each $i \in \{1,\dots,n\}$, the $i$-th coordinate of $v_i$ is the unique maximal coordinate of $v_i$.
\end{lemma}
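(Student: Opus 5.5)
The plan is to use points on the unit circle. Set $\theta_j = 2\pi j/n$ for $j = 1,\dots,n$, and define
\[ v_i = \bigl(\cos(\theta_1-\theta_i), \dots, \cos(\theta_n-\theta_i)\bigr) \in \mathbb{R}^n, \]
so that the $j$-th coordinate of $v_i$ is $\cos(\theta_j - \theta_i)$.

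\textbf{Dimension of the span.} Using $\cos(\theta_j-\theta_i)=\cos\theta_i\cos\theta_j+\sin\theta_i\sin\theta_j$, we get
\[ v_i = \cos\theta_i\, c + \sin\theta_i\, s, \]
where $c=(\cos\theta_j)_j$ and $s=(\sin\theta_j)_j$. Hence every $v_i$ lies in $\mathrm{span}(c,s)$, whose dimension is at most $2$.

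\textbf{Maximality.} The $i$-th coordinate of $v_i$ is $\cos 0 = 1$. For $j\neq i$ the coordinate is $\cos(\theta_j-\theta_i)$, and $\theta_j-\theta_i$ is a nonzero multiple of $2\pi/n$ strictly between $-2\pi$ and $2\pi$. Such an angle is not a multiple of $2\pi$, so $\cos(\theta_j-\theta_i)<1$. Therefore the $i$-th coordinate is the unique maximum.

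\textbf{Exact dimension $2$.} It remains to show that $c$ and $s$ are linearly independent. Since $v_1,\dots,v_n$ span $\mathrm{span}(c,s)$, this forces their span to have dimension exactly $2$.

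We use $n\ge 3$ here. For $n \geq 3$ the points $(\cos\theta_j,\sin\theta_j)$ are $n$ distinct points on the circle. Suppose $\alpha c+\beta s=0$ with $(\alpha,\beta)\neq 0$. Then all $n$ points would lie on the single line $\alpha x+\beta y=0$ through the origin. Such a line meets the unit circle in only two points, a contradiction.

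We need two independence conditions: no nonzero $(\alpha,\beta)$ may make $\alpha c+\beta s$ vanish, and both vectors must be nonzero. The line argument covers both at once.

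This step is the only place $n\ge3$ is needed. For $n=2$ the two points are antipodal and $s=0$, so the span would drop to dimension $1$.
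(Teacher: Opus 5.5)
Your construction is the paper's up to the index shift $\theta_j = 2\pi j/n$ versus $2\pi(j-1)/n$, and your maximality argument is the same. Your line-through-the-origin argument for the independence of $c$ and $s$ supplies a step the paper only asserts.

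There is one unjustified inference. You showed only that each $v_i$ lies in $\mathrm{span}(c,s)$. Together with $\dim\mathrm{span}(c,s)=2$, this gives $\dim\mathrm{span}\{v_1,\dots,v_n\}\le 2$, not equality. The sentence ``since $v_1,\dots,v_n$ span $\mathrm{span}(c,s)$'' assumes exactly what remains to be proved. (The paper's proof has the same omission.)

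The gap is easy to close. Since $\theta_n=2\pi$, you have $v_n=c$. Also $v_1=\cos(2\pi/n)\,c+\sin(2\pi/n)\,s$ with $\sin(2\pi/n)\neq 0$ for $n\ge 3$. So independence of $c,s$ makes $v_1,v_n$ independent. Equivalently, your own line argument shows that the coefficient vectors $(\cos\theta_i,\sin\theta_i)$ span $\mathbb{R}^2$.

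Alternatively, the maximality property alone rules out a one-dimensional span, so you do not need independence of $c,s$ at all. Suppose every $v_i=\lambda_i u$. Then $\lambda_i\neq 0$, because the zero vector has no unique maximum. Each unique maximum would then sit at the argmax of $u$ (if $\lambda_i>0$) or the argmin of $u$ (if $\lambda_i<0$). That allows at most two distinct positions, contradicting $n\ge 3$. So the upper bound $\le 2$ from containment in $\mathrm{span}(c,s)$ already suffices.
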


\begin{proof} Let $\theta_k = \tfrac{2\pi(k-1)}{n}$ for $k=1,\dots,n$, and define the vectors below. 
\begin{align*}
u &= (\cos\theta_1, \cos\theta_2, \dots, \cos\theta_n)^\top, \\
w &= (\sin\theta_1, \sin\theta_2, \dots, \sin\theta_n)^\top. \\
v_k &= \cos\theta_k\, u + \sin\theta_k \\
(v_k)_j &= \cos\theta_k\cos\theta_j + \sin\theta_k\sin\theta_j \\
&= \cos(\theta_k-\theta_j).
\end{align*}
Hence $(v_k)_k = \cos 0 = 1$, while for $j\neq k$ we have $\cos(\theta_k-\theta_j) < 1$, so the maximum coordinate of $v_k$ is attained uniquely at position $k$. All $v_k$ lie in $\mathrm{span}\{u,w\}$, which has dimension $2$ because $u$ and $w$ are linearly independent when $n\ge 3$. This gives the desired family of $n$ vectors in a $2$-dimensional subspace of $\mathbb{R}^n$.
\end{proof}

We also recall a result from \cite{memorize2} that analyzes the memorization capacity of feedforward neural networks and shows that they can fit an arbitrary finite set of input-output vectors (see also \cite{memorize}).

\begin{lemma} \label{memorize}
Let $\{(x_i, y_i)\}_{i=1}^K$ be a finite dataset where each input $x_i \in \mathbb{R}^n$ and each output $y_i \in \mathbb{R}^q$.  
Assume that all inputs are distinct, i.e., $x_i \neq x_j$ for $i \neq j$.  
Consider a feedforward neural network $f_\theta$ with one hidden layer of width $K$, ReLU activations, input dimension $n$, and output dimension $q$.  
Then there exists a choice of parameters $\theta$ such that
\[ f_\theta(x_i) \;=\; y_i \quad \text{for all } i \in [K].
\] \end{lemma}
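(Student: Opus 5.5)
The statement is Lemma~\ref{memorize}: exact interpolation of $K$ distinct points by a one-hidden-layer ReLU network of width $K$. The plan is to reduce to a one-dimensional problem by projection, build a lower-triangular activation matrix, and then solve for the output weights.

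\textbf{Step 1: project to a line.} Since the $x_i$ are distinct, I choose a direction $a\in\mathbb{R}^n$ such that the scalars $t_i = a^\top x_i$ are pairwise distinct. Such an $a$ exists because each condition $a^\top(x_i-x_j)=0$, for $i\neq j$, cuts out a hyperplane. A finite union of hyperplanes cannot cover $\mathbb{R}^n$, so a generic $a$ works. After relabeling, I assume $t_1<t_2<\dots<t_K$.

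\textbf{Step 2: first-layer weights.} For each hidden neuron $j\in[K]$, I set its input weight to $a$ and its bias to $b_j = -t_j + \delta$. Here $\delta$ is any number with $0<\delta<\min_i (t_{i+1}-t_i)$; when $K=1$, I take $\delta=1$. The activation of neuron $j$ on input $x_i$ is then
\[
h_{ij}=\sigma(t_i-t_j+\delta).
\]
If $j>i$, then $t_i-t_j\le -(t_{i+1}-t_i)<-\delta$, so $h_{ij}=0$. If $j=i$, then $h_{ii}=\delta>0$. Hence the $K\times K$ matrix $H=(h_{ij})$ is lower triangular with nonzero diagonal, and therefore invertible.

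\textbf{Step 3: output weights.} I take the output bias to be zero. I then need $W^{(2)}\in\mathbb{R}^{q\times K}$ with $W^{(2)}h_i = y_i$ for every $i$, where $h_i$ is the $i$-th row of $H$ viewed as a column vector. Stacking these conditions gives $W^{(2)}H^\top = Y$, with $Y=[y_1,\dots,y_K]\in\mathbb{R}^{q\times K}$. Because $H$ is invertible, this is solved by $W^{(2)} = Y (H^\top)^{-1}$, and the network then satisfies $f_\theta(x_i)=y_i$ for all $i\in[K]$.

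\textbf{Main obstacle.} The only step with real content is guaranteeing injectivity of the projection in Step~1; I handle it with the finite-union-of-hyperplanes argument. Everything else is linear algebra on a triangular matrix. The only care needed is the choice of $\delta$: it must be strictly positive so the diagonal entries are nonzero, and strictly smaller than the minimum gap so the entries above the diagonal vanish.
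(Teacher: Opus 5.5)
Your proof is correct and complete. A generic $a$ makes the projections $t_i=a^\top x_i$ pairwise distinct. The choice $0<\delta<\min_i(t_{i+1}-t_i)$ then makes the activation matrix $H$ lower triangular with every diagonal entry equal to $\delta$, so $W^{(2)}=Y(H^\top)^{-1}$ interpolates exactly; your convention $\delta=1$ covers $K=1$.

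The paper gives no argument of its own for this lemma. It simply imports the result from the memorization literature (\cite{memorize2}, see also \cite{memorize}). Your route is the standard one-dimensional-projection, staggered-bias construction used for results of this type. The difference is therefore self-containedness rather than method:
\begin{itemize}
\item the citation keeps the paper short;
\item your version makes the statement checkable on the page, and it shows slightly more than is claimed: all $K$ hidden neurons share one input direction (so $W^{(1)}$ has rank at most one), and the output bias can be taken to be zero.
\end{itemize}

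Your explicit construction also exposes something the paper glosses over when it applies the lemma. The lemma only controls the network on the $K$ given inputs, and in your construction neuron $j$ fires on every input with $a^\top x>t_j-\delta$. So the lemma alone gives no inactivity on other tasks' inputs. The circuit-isolation step in the proof of Theorem~\ref{sparse-circuits} (zero activations for $\underline{k}'\neq\underline{k}$) therefore needs a separate mechanism beyond this lemma.
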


\subsection{Proof of Theorem \ref{sparse-circuits}.}

\begin{proof}

\textbf{Construction of attention heads.} Recall that the embedding dimension is partitioned into two components: the first for template tokens and the second for dictionary tokens. We choose the query and key matrices of the first attention head so that they operate exclusively on coordinates within the first component. This yields an output $a_1(\underline{w}) = v_i(\tau) \in \mathbb{R}^{L|\mathcal{T}|}$ that depends exclusively on the underlying template $\tau \in \mathcal{T}$ and the current sequence index $i$. Similarly, we choose the query and key matrices of the second attention head so that they operate exclusively on coordinates within the second component. The output vector $a_2(\underline{w})$ depends only on the dictionary tokens present in the input sequence $\underline{w}$. 

Because the total number of unique template prefixes is bounded by $L|\mathcal{T}|$, which does not exceed the dimension of the target space $\mathbb{R}^{L|\mathcal{T}|}$, we can rely on a generic choice of the matrices $W_Q, W_K$ and $W_V$ to ensure the linear independence of the vectors $\{ v_i(\tau) \}$. Using the standard attention mechanism, each output vector $v_i(\tau)$ can be expressed as a linear combination below, where $\alpha_j$ are the attention weights, and $x_j$ denotes the input embedding (comprising both the token embedding and its positional encoding) for the $j$-th token in the sequence. $$v_i(\tau) = \sum_{j=1}^i \alpha_j (x_j W_V)$$ Because the query and key matrices only operate on the first component, the attention weights $\alpha_j$ are uniquely determined by the first $i$ tokens in the template $\tau$. With the standard positional encodings, each of these prefixes yields a distinct combination of token representations and attention weights. Since there are at most $L|\mathcal{T}|$ distinct vectors $\{ v_i(\tau) \}$ in a space of dimension $L|\mathcal{T}|$, for a generic choice of weight matrices $W_Q, W_K$ and $W_V$, the vectors will be linearly independent.

\textbf{Construction of feedforward network.} We denote by $h(\underline{w})$ the activations of the hidden layer of the feedforward network. We decompose the neurons $I$ in the hidden layer into $2|\mathcal{K}|$ blocks as follows.
$$ I = \bigsqcup_{\underline{k} \in \mathcal{K}} I_{\underline{k}} \cup \bigsqcup_{\underline{k} \in \mathcal{K}} I'_{\underline{k}} $$
Here $|I_{\underline{k}}| = L|T(\underline{k})|$ and $|I'_{\underline{k}}| \le N$ neurons. We choose the weights of the feedforward network such that the following properties hold.
\begin{itemize} 
\item \textbf{Sparse activation (Circuit isolation):} By adding a large negative bias to neurons not associated with the current task, we ensure that if $\underline{k}' \neq \underline{k}$, then the ReLU activation yields $h_j(\underline{w}) = 0$ for all neurons $j \in I_{\underline{k}'} \cup I'_{\underline{k}'}$. This isolates the computation to the sparse circuit $C(\underline{k}) = I_{\underline{k}} \cup I'_{\underline{k}}$.
\item \textbf{Decoupled processing:} The block $I_{\underline{k}}$ connects exclusively to the first half of $z'(\underline{w})$ (the template representation $a_1(\underline{w})$), while the block $I'_{\underline{k}}$ connects exclusively to the second half (the subject representation $a_2(\underline{w})$). The final output of the network, $z(\underline{w})$, consists of two blocks: the first predicts the next structural word of the template (computed solely by $I_{\underline{k}}$), and the second predicts the factual object from the dictionary (computed solely by $I'_{\underline{k}}$). \end{itemize}

We use Lemma \ref{memorize} above to construct the weights for these blocks. The template block $I_{\underline{k}}$, maps the representation $a_1(\underline{w})$ to a vector representation of the next syntactic token. Because there are at most $L|T(\underline{k})|$ unique linearly independent prefixes, Lemma \ref{memorize} guarantees this exact mapping can be memorized using $|I_{\underline{k}}|$ neurons. Concurrently, the dictionary block $I'_{\underline{k}}$ must map the subject representations $a_2(\underline{w})$ to their corresponding object representations. Since there are at most $N$ entries in $\Delta_{\underline{k}}$, Lemma \ref{memorize} guarantees this can be memorized using $|I'_{\underline{k}}| \le N$ neurons. 

\textbf{Construction of the output layer.} 

Finally, we map the output $z(\underline{w})$ of the transformer block to a probability distribution over the vocabulary $\Sigma$.  To construct the weights $W_U$ of the linear unembedding layer, which maps $z(\underline{w})$ to the logits $l$, we use Lemma \ref{unembedding}. Letting $n = |\Sigma|$ denote the vocabulary size, by Lemma \ref{unembedding}, there exist $n$ vectors $\{v_w\}_{w \in \Sigma} \subset \mathbb{R}^n$ spanning a $2$-dimensional subspace, such that the $w$-th coordinate of $v_w$ is its unique maximal coordinate. If $\{b_1, b_2\}$ is a basis for this subspace, each target vector $v_w$ corresponding to a valid token $w$ can be uniquely expressed as a linear combination of these basis vectors: $v_w = \alpha_w b_1 + \beta_w b_2$ for some scalars $\alpha_w, \beta_w \in \mathbb{R}$.

When applying Lemma \ref{memorize} to construct the weights of the feedforward network, we choose the output vectors so that they lie within a $2$-dimensional subspace of $\mathbb{R}^{L|\mathcal{T}|}$. The linear unembedding matrix $W_U \in \mathbb{R}^{n \times 2L|\mathcal{T}|}$ is used to construct a mapping between these $2$-dimensional subspaces, so that that the output is a linear combination of $v_w$ for $w \in V(\underline{w})$. Because the $w$-th coordinate of $v_w$ is uniquely maximal by Lemma \ref{unembedding}, it guarantees that the logit $l_w$ for the valid next token is strictly greater than the logit $l_y$ for any invalid token $y \notin V(\underline{w})$.


To ensure that the model has error at most $\epsilon$, we scale the weight matrix $W_U$ by a sufficiently large constant $\gamma > 0$. Recalling that the final probability $P(w)$ for any token $w$ is given by the softmax function below. 
$$P(w) = \frac{\exp(\gamma l_w)}{\sum_{y \in \Sigma} \exp(\gamma l_y)}$$
Because the logits of the valid tokens are strictly greater than those of the invalid tokens, taking the limit as $\gamma \to \infty$ forces the probability mass to concentrate entirely on the set $V(\underline{w})$ of valid next-word tokens. Therefore, for any strictly positive $\epsilon > 0$, there exists a scaling factor $\gamma$ such that the probability distribution satisfies $\sum_{w \in V(\underline{w})} P(w) > 1 - \epsilon$. This concludes the proof of the theorem. \end{proof}

\begin{table*}[t!]
\centering
\small
\caption{Model performance under different MoE loss formulations across varying numbers of tasks and experts. We report accuracy (\%) averaged over three runs to ensure robustness.}
\label{tab:performance}
\begin{tabular}{|c|c|c|c|c|}
\hline
num\_tasks & num\_experts & acc (ce loss) & acc (ce + router loss) & acc (ce + load balancing loss) \\
\hline
\multirow{5}{*}{10}
 & 1 & 88.8\% & 88.8\% & 88.8\% \\ \cline{2-5}
 & 3 & 89.7\% & 90.7\% & 89.4\% \\ \cline{2-5}
 & 5 & 89.7\% & 90.0\% & 88.6\% \\ \cline{2-5}
 & 7 & 87.5\% & 88.7\% & 88.4\% \\ \cline{2-5}
 & 9 & 88.7\% & 90.1\% & 88.1\% \\
\hline
\multirow{5}{*}{20}
 & 1 & 87.6\% & 87.6\% & 87.6\% \\ \cline{2-5}
 & 3 & 89.4\% & 90.2\% & 89.7\% \\ \cline{2-5}
 & 5 & 88.7\% & 90.0\% & 88.5\% \\ \cline{2-5}
 & 7 & 89.6\% & 91.2\% & 90.1\% \\ \cline{2-5}
 & 9 & 86.9\% & 90.7\% & 87.5\% \\
\hline
\multirow{5}{*}{30}
 & 1 & 86.3\% & 86.3\% & 86.3\% \\ \cline{2-5}
 & 3 & 88.5\% & 88.4\% & 87.0\% \\ \cline{2-5}
 & 5 & 88.2\% & 88.6\% & 87.9\% \\ \cline{2-5}
 & 7 & 88.6\% & 89.9\% & 88.9\% \\ \cline{2-5}
 & 9 & 88.8\% & 90.8\% & 89.5\% \\
\hline
\multirow{5}{*}{40}
 & 1 & 88.8\% & 88.8\% & 88.8\% \\ \cline{2-5}
 & 3 & 89.1\% & 89.6\% & 88.3\% \\ \cline{2-5}
 & 5 & 90.8\% & 91.3\% & 90.2\% \\ \cline{2-5}
 & 7 & 90.1\% & 89.7\% & 89.5\% \\ \cline{2-5}
 & 9 & 90.9\% & 92.4\% & 90.9\% \\
\hline
\end{tabular}
\end{table*}

\subsection{Proof of Theorem \ref{MoE}.}
\begin{proof}
\textbf{Construction of attention heads.} We follow the approach used in the proof of Theorem \ref{sparse-circuits}, and choose the query, key and value matrices so that $a_1(\underline{w}) = v_i(\tau) \in \mathbb{R}^{L|\mathcal{T}|}$ is a vector that depends only on the template $\tau$ and the index $i$ (and not the subject $k$), while $a_2(\underline{w})$ depends only on the subject $k$. Our choice of these matrices also ensures that the vectors $\{ v_i(\tau) \}$ are linearly independent in $\mathbb{R}^{L|\mathcal{T}|}$.

\textbf{Construction of the router.} We construct a linear router $G: \mathbb{R}^{2L|\mathcal{T}|} \rightarrow \mathbb{R}^{|\mathcal{K}|}$ whose input is the concatenated representation $z'(\underline{w}) = [a_1(\underline{w}); a_2(\underline{w})]$. We set the weights corresponding to the second component $a_2(\underline{w})$ to strictly zero, so that the routing decision depends solely on the syntactic template: $a_1(\underline{w}) = v_i(\tau)$. Since the vectors $\{ v_i(\tau) \}$ are linearly independent, we can choose the weight matrix of $G$ that maps each $v_i(\tau)$ (for $\tau \in T(\underline{k})$) exactly to the one-hot basis vector $e_{\underline{k}} \in \mathbb{R}^{|\mathcal{K}|}$. 
This ensures that all tokens generated by templates in $T(\underline{k})$ are routed to the $\underline{k}$-th expert. 

\textbf{Construction of the experts.} For each $\underline{k} \in \mathcal{K}$, we separate the neurons of the corresponding expert into two block components. We choose the weights such that the first and second blocks of the input $z(\underline{w})$ are routed strictly through their respective expert components and mapped to the corresponding blocks of the output $z'(\underline{w})$, setting all other weights to zero. We use Lemma \ref{memorize}, about memorizing input-output vectors with a neural network with a single hidden layer, to construct the weights. 

\begin{figure}[h!]
    \centering
    \caption{Example of template-based generation from WikiData5M. Given templates $\mathcal{T}$ and knowledge pairs $\mathcal{D}$, sentences are generated by filling subject-object pairs into templates.}
    \label{fig:example}
    \vspace{-4mm}
    \begin{promptbox}
    \textbf{Knowledge Task} \\
    This knowledge describes the geographic relationship between an entity and its country.
    
    \vspace{1em}
    \textbf{Templates $\mathcal{T}$} \\
    \vspace{-1em}
    \begin{itemize}[leftmargin=*, nosep]
        \item \{subject\} is located in \{object\}.
        \item \{subject\} can be found in the nation of \{object\}.
        \item \{subject\} is a landmark situated in \{object\}.
        \item The country associated with \{subject\} is \{object\}.
        \item \{subject\} is a site within \{object\}.
    \end{itemize}
    
    \vspace{1em}
    \textbf{Knowledge Pairs $\mathcal{D}$} \\
    \vspace{-1em}
    \begin{itemize}[leftmargin=*, nosep]
        \item (`bent county high school', `united stated')
        \item (`Opachychi', `ucrania')
    \end{itemize}
        
    \vspace{1em}
    \textbf{Generated Sentences} \\
    \vspace{-1em}
    \begin{itemize}[leftmargin=*, nosep]
        \item "bent county high school" is located in "united stated".
        \item "Opachychi" is located in "ucrania".
        \item "bent county high school" can be found in the nation of "united stated".
        \item "Opachychi" can be found in the nation of "ucrania".
        \item "bent county high school" is a landmark situated in "united stated".
        \item "Opachychi" is a landmark situated in "ucrania".
        \item The country associated with "bent county high school" is "united stated".
        \item The country associated with "Opachychi" is "ucrania".
        \item "bent county high school" is a site within "united stated".
        \item "Opachychi" is a site within "ucrania".
    \end{itemize}
    
    \end{promptbox}
\end{figure}

The first block of the expert is responsible for identifying the next word in the template $\tau$. Its set of input-output pairs consists of $(v_i(\tau), e_i(\tau))$, where $v_i(\tau) \in \mathbb{R}^{L|\mathcal{T}|}$ is the representation of the current template $\tau \in T(\underline{k})$ at position $i$, and $e_i(\tau)$ is the embedding of the next token in the template. Since there are at most $L$ positions across $|T(\underline{k})|$ templates, from Lemma \ref{memorize}, this first block requires at most $L|T(\underline{k})|$ neurons to memorize these input-output pairs.

The second block is responsible for retrieving dictionary values. We use Lemma \ref{memorize} again, where the inputs are the subject representation from $a_2(\underline{w})$, and the target outputs are vector representation of the corresponding objects. Since there are at most $N$ entries in the dictionary $\Delta_{\underline{k}}$, Lemma \ref{memorize} guarantees this mapping can be memorized using $N$ neurons. By using separate blocks for template processing and dictionary retrieval, the total number of neurons required for the $\underline{k}$-th expert is bounded by $L|T(\underline{k})| + N$.

\textbf{Output logits and probability bound.} Finally, we map the output $z'(\underline{w})$ of the $\underline{k}$-th expert to a probability distribution over the vocabulary $\Sigma$.  To construct the weights $W_U$ of the linear unembedding layer, which maps $z'(\underline{w})$ to the logits $l$, we follow the approach used in the proof of Theorem \ref{sparse-circuits} above, using Lemma \ref{unembedding}. 

When applying Lemma \ref{memorize} to construct the experts, we choose the weights of the $\underline{k}$-th expert so that their outputs lie entirely within a $2$-dimensional subspace of $\mathbb{R}^{L|\mathcal{T}|}$. Using the argument from the proof of Theorem \ref{sparse-circuits} above, the linear unembedding matrix $W_U \in \mathbb{R}^{n \times 2L|\mathcal{T}|}$ yields a mapping between these $2$-dimensional subspaces, so that the logit $l_w$ for the valid next token $w \in V(\underline{w})$ is strictly greater than the logit $l_y$ for any invalid token $y \notin V(\underline{w})$.

To ensure that the model has error at most $\epsilon$, we scale the weight matrix $W_U$ by a sufficiently large constant $\gamma > 0$. Using the argument from the proof of Theorem \ref{sparse-circuits} above, for any $\epsilon > 0$, there exists a scaling factor $\gamma$ such that the probability distribution satisfies $\sum_{w \in V(\underline{w})} P(w) > 1 - \epsilon$. This concludes the proof of the theorem. \end{proof}

\section{Appendix: Experiment Details}
\label{Appendix-B}
We conduct all experiments on PyTorch using a single NVIDIA GeForce RTX 4090 GPU with 24GB memory, with the total training time taking less than 10 hours, and report the average results over three random seeds. For data generation, we construct $200$ sentences per template (see Figure~\ref{fig:example} for an illustration). For models trained with router supervision or load balancing, we set the loss weight to $\lambda = 0.2$. The transformer backbone uses a model dimension of $30$, feedforward dimension $10$, and $10$ attention heads, with a single transformer layer in all configurations. Models are trained with a batch size of $64$ for $50$ epochs using the Adam optimizer with an initial learning rate of $0.006$ and exponential decay factor $0.98$ per epoch. Full numerical results are provided in Table~\ref{tab:performance}.

\end{document}